\documentclass{article}
\usepackage[utf8]{inputenc}
\usepackage[english]{babel}

\usepackage[tbtags]{amsmath}
\usepackage{amsfonts,amssymb,mathrsfs,amscd,comment,amsthm}
\usepackage{natbib}

\usepackage{url}  
\usepackage{subcaption}
\usepackage{booktabs}       
\usepackage{amsfonts}       
\usepackage{nicefrac}       
\usepackage{microtype}      
\usepackage{comment}
\usepackage{amssymb, amsmath, latexsym}
\usepackage{url}

\usepackage{algorithm}
\usepackage{algorithmic}

\usepackage{tabularx}
\usepackage{paralist}
\usepackage{mathtools}

\usepackage{bbm} 
\usepackage{wrapfig}
\usepackage{makecell}
\usepackage{multirow}
\usepackage{booktabs}

\usepackage{nicefrac}       

\usepackage[flushleft]{threeparttable} 

\usepackage{caption}
\usepackage{multirow}
\usepackage{colortbl}
\definecolor{bgcolor}{rgb}{0.8,1,1}
\definecolor{bgcolor2}{rgb}{0.8,1,0.8}
\definecolor{niceblue}{rgb}{0.0,0.19,0.56}

\usepackage{hyperref}
\hypersetup{colorlinks,linkcolor={blue},citecolor={niceblue},urlcolor={blue}}

\usepackage{pifont}
\definecolor{PineGreen}{RGB}{0,110,51}
\definecolor{BrickRed}{RGB}{143,20,2}

\usepackage{tikz-cd} 

\DeclareMathOperator*{\argmin}{arg\,min}

\newcommand{\R}{\mathbb{R}}

\def\<#1,#2>{\left\langle #1,#2\right\rangle}

\newcolumntype{Y}{>{\centering\arraybackslash}X}

\usepackage{xspace}



\usepackage[colorinlistoftodos,bordercolor=orange,backgroundcolor=orange!20,linecolor=orange,textsize=scriptsize]{todonotes}



\usepackage{hyperref}
\graphicspath{{plots/}}

\usepackage{makecell}

\usepackage{accents}
\newlength{\dhatheight}

\usepackage{pgfplotstable} 
\usetikzlibrary{automata, positioning, arrows, shapes, fit, calc, intersections}
\usepgfplotslibrary{statistics}


\def\la{\langle}
\def\ra{\rangle}

\newtheorem{theorem}{Theorem}

\setlength{\bibsep}{2.5pt plus 1ex}

\begin{document}
\title{\bf\large\MakeUppercase{ On the Equivalence of Optimal Transport Problem and Action Matching with Optimal Vector Fields}}
\author{Nikita Kornilov$^{1,2}$, Alexander Korotin$^{1}$\footnote{1. Applied AI Institute, Moscow, Russia 2. Moscow Center for Advanced Studies, Russia}}
\date{}

\maketitle
 \paragraph{Abstract.} Flow Matching (FM) method in generative modeling maps arbitrary probability
distributions by constructing an interpolation between them and then learning
the vector field that defines ODE for this interpolation. Recently, it was
shown that FM can be modified to map distributions optimally in terms of the
quadratic cost function for any initial interpolation. To achieve this, only
specific optimal vector fields, which are typical for solutions of Optimal
Transport (OT) problems, need to be considered during FM loss minimization. In this note, we show
that considering only optimal vector fields can lead to OT in another approach:
Action Matching (AM). Unlike FM, which learns a vector field for a manually
chosen interpolation between given distributions, AM learns the vector field
that defines ODE for an entire given sequence of distributions.

\section{Background on related works} 
In this section, we briefly recall the most related methods: Flow Matching method \cite{liu2023flow} that maps arbitrary distributions to each other, Action Matching  method \cite{neklyudov2023action}  that learns the whole given dynamics between distributions and Optimal Flow Matching method \cite{kornilov2024optimal} that modifies FM to map the distributions optimally in terms of the quadratic transport cost.     

\paragraph{Flow Matching (FM) \cite{liu2023flow}.} FM maps distribution $p_0$ to $p_1$ by integrating a specific vector field of an ordinary differential equation (ODE) over a time interval \([0,1]\). It builds this specific vector field $u^\pi_{FM}: [0,1] \times \mathbb{R}^D \to \mathbb{R}^D$ by minimizing the following loss \(\mathcal{L}^\pi_{FM}(u)\) over vector fields \(u\), namely, 
$$\mathcal{L}^\pi_{FM}(u) := \int_{0}^1 \int_{\mathbb{R}^D \times \mathbb{R}^D} \| u_t(x_t) - (x_1 - x_0) \|^2 \pi(x_0, x_1) \, dx_0 \, dx_1 \, dt,$$
where \(\pi\) is a given transport plan between \(p_0\) and \(p_1\) (e.g., \(\pi = p_0 \times p_1\)). If the obtained field \(u^\pi_{FM} = \argmin_u \mathcal{L}^\pi_{FM}(u)\) is integrated starting from \(x_0 \sim p_0\), the distribution of the endpoint \(x_1(x_0)\) at time \(t=1\) will exactly match \(p_1\), meaning that this mapping is indeed a transport one. Note that the properties of the final mapping depend on the initial plan \(\pi\).

\paragraph{Action Matching (AM) \cite{neklyudov2023action}.} AM works with a sequence of distributions \( \{p_t\}_{t \in [0,1]} \), starting at \( p_0 \) and ending at \( p_1 \). The method builds an ODE vector \( u^{\{p_t\}}_{AM}: [0,1] \times \mathbb{R}^D \to \mathbb{R}^D \) which generates the whole probability sequence \(  \{p_t\}_{t \in [0,1]}  \).
For that it minimizes the following loss $\mathcal{L}^{\{p_t\}}_{AM}(s)$ over scalar functions \( s_t: [0,1] \times \mathbb{R}^D \to \mathbb{R} \), namely,
 \begin{eqnarray}
      \mathcal{L}^{\{p_t\}}_{AM}(s) &=& \int_{\R^D} s_0(x_0) p_0(x_0) dx_0 - \int_{\R^D} s_1(x_1) p_1(x_1) dx_1\notag \\
      &+& \int_{0}^1 \int_{\R^D}\! \left[\frac{1}{2}\|\nabla s_t(x_t)\|^2 + \frac{\partial s_t}{\partial t} (x_t) \right]\! p_t(x_t) dx_t dt \label{eq:AM loss}.
\end{eqnarray}
For the minimizer \( s^{\{p_t\}}_{AM} := \argmin_s \mathcal{L}^{\{p_t\}}_{AM}(s) \), the corresponding vector field \( u^{\{p_t\}}_{AM}\) is defined as $u^{\{p_t\}}_{AM}(x_t, t) := \nabla_{x_t} s^{\{p_t\}}_{AM}(x_t, t).$ Integrating this field starting from \( x_0 \sim p_0 \) generates a process which marginal distributions at each time \( t \) coincide with \( p_t \). In particular, at \( t=1 \), the integration yields a transport map from \( p_0 \) to \( p_1 \) which properties depend on  \( \{ p_t \} \).

\paragraph{Optimal Transport (OT) \cite{lol}}. Among all possible transport maps from \( p_0 \) to \( p_1 \), there exists one that optimally preserves proximity between input and output points in the mean-square sense. This map is known as the solution to the Optimal Transport (OT) problem with the quadratic cost function. One way to compute it is by minimizing the dual OT loss $\mathcal{L}_{OT}(\Psi)$ over \textit{convex} functions \( \Psi: \mathbb{R}^D \to \mathbb{R} \):  
\begin{eqnarray}
    \mathcal{L}_{OT}(\Psi) := \int_{\mathbb{R}^D} \Psi(x_0) p_0(x_0) \, dx_0 + \int_{\mathbb{R}^D} \overline{\Psi}(x_1) p_1(x_1) \, dx_1, \label{eq: OT loss} 
\end{eqnarray}
where \( \overline{\Psi}(x_1) := \sup_{y \in \mathbb{R}^D} \left[ \langle x_1, y \rangle - \Psi(y) \right] \) is the convex conjugate of \( \Psi \). The optimal function \( \Psi^* \) is called the \textit{Brenier potential} \cite{villani2021topics}, and the OT map itself is given by \( x_0 \mapsto \nabla \Psi^*(x_0) \).

\paragraph{Optimal Flow Matching (OFM) \cite{kornilov2024optimal}.} The authors of \cite{kornilov2024optimal} propose OFM method, an adaptation of FM to solve the OT problem. They restrict the domain of FM loss minimization only to \textit{optimal vector fields} which are typical for the solutions of the dynamic OT or the Benamou-Brenier problem. These fields are defined by convex functions \( \Psi \):  initial point \( z_0 \) at \( t=0 \) moves to \( \nabla \Psi(z_0) \) at \( t=1 \), and intermediate point \( x_t, t \in (0,1) \) lies on the straight line between \( z_0 \) and \( \nabla \Psi(z_0) \): \[
  x_t = (1-t) z_0 + t \nabla \Psi(z_0), \quad u^{\Psi}_t(x_t) = \nabla \Psi(z_0) - z_0.
  \]
 With notation $\mathcal{L}_{OFM}^\pi(\Psi) := \mathcal{L}_{FM}^\pi(u^\Psi)$, 
for any  plan $\pi$, the following relation holds $$\mathcal{L}^\pi_{OFM}(\Psi)=2\mathcal{L}_{OT}(\Psi)+\text{Const}(\pi).$$
It means that the losses $\mathcal{L}^\pi_{OFM}$ and $\mathcal{L}_{OT}$ share the same minimizers w.r.t. $\Psi$. As a consequence, unlike FM, the solution $\Psi^* = \argmin_\Psi \mathcal{L}^\pi_{OFM}(\Psi)$ in the OFM method does not depend on the initial plan $\pi$, and $\nabla \Psi^{*}$ yields the OT map. \newline
\vspace{-4mm}

\section{Equivalence of Optimal Transport and Action Matching }

In this section, we demonstrate that Action Matching loss \eqref{eq:AM loss}, when considering only the optimal vector fields \( u^{\Psi} \) (more precisely their potentials \( s^{\Psi} \)), is  equivalent to the OT dual loss \eqref{eq: OT loss} for any sequence \( \{p_t\} \).
\begin{theorem}
      Consider two probability distributions  $p_0, p_1\in\mathcal{P}_{2,ac}(\mathbb{R}^{D})$ and any sequence of distributions  $\{p_t\}_{t \in (0,1)}$ connecting them. Then, OT dual loss  $\mathcal{L}_{OT}(\Psi)$ and AM loss $\mathcal{L}^{\{p_t\}}_{AM}(s^{\Psi})$ match each other up to a constant.
\end{theorem}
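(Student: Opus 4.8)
The plan is to first make the Action Matching potential $s^{\Psi}$ explicit, then exploit the Hamilton--Jacobi structure of the optimal field to collapse the AM loss to its boundary terms. Since $u^{\Psi}_t(x_t) = \nabla\Psi(z_0) - z_0$ with $x_t = (1-t)z_0 + t\nabla\Psi(z_0) = \nabla\phi_t(z_0)$ and $\phi_t(z) := \tfrac{1-t}{2}\|z\|^2 + t\Psi(z)$ convex (indeed strongly convex for $t<1$, so $\nabla\phi_t$ is a bijection of $\R^D$), I would solve $\nabla_x s^{\Psi}_t = u^{\Psi}_t$ to obtain the closed form $s^{\Psi}_t(x) = \tfrac1t\big(\tfrac12\|x\|^2 - \phi_t^{*}(x)\big)$ up to a function of $t$ alone. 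This global formula is what allows $s^{\Psi}_t$ to be evaluated against the arbitrary $p_t$, whose support need not lie on the displacement lines.

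The key step is that $s^{\Psi}$ solves the Hamilton--Jacobi equation $\partial_t s^{\Psi}_t + \tfrac12\|\nabla s^{\Psi}_t\|^2 = 0$ on $(0,1)\times\R^D$. I would verify this most transparently in Lagrangian coordinates: along the straight characteristic $x_t(z)$ the speed $\nabla\Psi(z)-z$ is constant, so the material derivative $\tfrac{d}{dt}s^{\Psi}_t(x_t) = \partial_t s^{\Psi}_t + \|\nabla s^{\Psi}_t\|^2$ equals the constant $\tfrac12\|\nabla\Psi(z)-z\|^2$, which forces the HJ identity. Consequently the bracketed integrand in the time integral of $\mathcal{L}^{\{p_t\}}_{AM}(s^{\Psi})$ vanishes pointwise, so that entire double integral is zero \emph{irrespective of} the chosen sequence $\{p_t\}$; this is exactly what decouples the value of the loss from $\{p_t\}$ and leaves only $\int s^{\Psi}_0 p_0\,dx - \int s^{\Psi}_1 p_1\,dx$.

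It then remains to identify the two endpoint potentials. Integrating the Lagrangian identity from the initial value $s^{\Psi}_0(x) = \Psi(x) - \tfrac12\|x\|^2$ and applying the Fenchel equality $\langle z,\nabla\Psi(z)\rangle = \Psi(z) + \overline{\Psi}(\nabla\Psi(z))$ yields, after expanding $\tfrac12\|\nabla\Psi(z)-z\|^2$, the clean formula $s^{\Psi}_1(x) = \tfrac12\|x\|^2 - \overline{\Psi}(x)$. Substituting both into the boundary terms gives $\int \Psi\,p_0\,dx + \int \overline{\Psi}\,p_1\,dx - \tfrac12\int\|x\|^2 p_0\,dx - \tfrac12\int\|x\|^2 p_1\,dx$, whose first two terms are precisely $\mathcal{L}_{OT}(\Psi)$ and whose last two are the (finite, since $p_0,p_1\in\mathcal{P}_{2,ac}$) second moments, a constant independent of $\Psi$. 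This is the claimed equality up to a constant, and it shows the two losses share minimizers.

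The main obstacle I anticipate is rigor rather than algebra: justifying the Hamilton--Jacobi identity and the endpoint formula as $t\to 1$, where $\nabla\Psi$ need be neither injective nor surjective, so $s^{\Psi}_1$ is a genuine (viscosity/a.e.) limit rather than a classical value. I would lean on absolute continuity of $p_0,p_1$ and a.e.\ differentiability of convex functions to control this limit and to validate passing from the time integral and boundary terms to the moment constant. One must also confirm that the time-only ambiguity in $s^{\Psi}_t$ cancels between the boundary and time-integral contributions, which it does because $\int p_t\,dx \equiv 1$.
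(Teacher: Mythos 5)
Your proposal is correct and follows essentially the same route as the paper: derive the explicit potential $s^{\Psi}_t(x) = \tfrac{1}{t}\bigl(\tfrac{1}{2}\|x\|^2 - \overline{\varphi_t}(x)\bigr)$ with $\varphi_t = (1-t)\|\cdot\|^2/2 + t\Psi$, show that the Hamilton--Jacobi term $\partial_t s^{\Psi}_t + \tfrac{1}{2}\|\nabla s^{\Psi}_t\|^2$ vanishes identically so the time integral drops out for any $\{p_t\}$, and read off $\mathcal{L}_{OT}(\Psi)$ plus second moments of $p_0,p_1$ from the boundary terms. The only real divergence is how the Hamilton--Jacobi identity is checked --- the paper differentiates $\overline{\varphi_t}/t$ in $t$ via the Envelope Theorem, whereas you take the material derivative along the straight characteristics; for that step you should actually evaluate $s^{\Psi}_t(x_t(z)) = s^{\Psi}_0(z) + \tfrac{t}{2}\|\nabla\Psi(z)-z\|^2$ using the Fenchel equality $\overline{\varphi_t}(x_t(z)) = \langle x_t(z), z\rangle - \varphi_t(z)$, since constancy of the speed alone does not by itself imply that the material derivative equals $\tfrac{1}{2}\|\nabla\Psi(z)-z\|^2$.
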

\begin{proof}
First, we find an explicit formula for $s^\Psi$ which gradient equals $u^\Psi$, i.e., $u^\Psi_t \equiv \nabla s_t^\Psi.$ We note that, for point $x_t \in \R^D$,  initial  point $z_0 \in \R^D$ satisfies \vspace{-1mm}
 \begin{eqnarray}
 x_t  &=&  t \nabla\Psi(z_0)  + (1-t) z_0 \Rightarrow x_t = \nabla ( t \Psi(\cdot) + (1-t)\|\cdot\|^2/2 )(z_0) := \nabla \varphi_t(z_0).\notag
 \end{eqnarray}
Then, $z_0 = \nabla \overline{\varphi_t}(x_t)$ and vector field $u^{\Psi}_t(x_t)$ can be expressed via $s^\Psi$: 
 \begin{eqnarray}
     u^{\Psi}_t(x_t) &=& \nabla\Psi(z_0) - z_0 = \frac{x_t - z_0}{t} = \nabla (\frac{\|\cdot\|^2}{2t}  - \frac{\overline{\varphi_t}}{t})(x_t), \notag \\ 
     s^\Psi_t(x_t) &:=& \frac{\|x_t\|^2}{2t}  - \frac{\overline{\varphi_t}(x_t)}{t} .\label{eq:s_t explicit}
 \end{eqnarray}
In particular, the corner cases $t = 0$ and $t = 1$ are $s^\Psi_1(x_1) =  \nicefrac{\|x_1\|^2}{2} - \overline{\Psi}(x_1), \quad s^\Psi_0(x_0) =  \Psi(x_0) - \nicefrac{\|x_0\|^2}{2}.  $
Next, we put explicit values of the function $s^\Psi$ from \eqref{eq:s_t explicit} into AM functional \eqref{eq:AM loss}. For the time $t \in (0,1)$, the following equality holds $\|\nabla s^\Psi_t(x_t)\|^2/2 = \nicefrac{\|x_t  - z_0\|^2}{2t^2}.$ If we take derivative from $s_t$ w.r.t. $t$, the term  $\|x_t\|^2/2t$ from \eqref{eq:s_t explicit} can be omitted:
\begin{eqnarray}
    \frac{\overline{\varphi_t}(x_t)}{t} &=& \frac{1}{t} \max\limits_{z \in \R^D}\left\{ \la x_t, z \ra  - t \Psi(z) - \frac{(1-t)}{2}\|z\|^2\right\} \notag \\
    &=& \max\limits_{z \in \R^D}\left\{ \frac{
 \la x_t, z \ra}{t}  -  \Psi(z) - \frac{(1-t)}{2t}\|z\|^2\right\}.
\end{eqnarray}
Moreover, $\max$ is attained at the point $z_0.$ According to the Envelope Theorem, in order to take derivative from $\max$ w.r.t. $t$, one needs to differentiate maximized function and then put the point $z_0$, at which maximum is attained, as an argument: \vspace{-0.5mm}
 \begin{eqnarray}
  \frac{\partial (\overline{\varphi_t}/t)}{\partial t}   &=& - \frac{\la x_t, z_0 \ra}{t^2}+ \frac{\|z_0\|^2}{2t^2}, \notag \\     \frac{\partial s^\Psi_t}{\partial t} &= &-\frac{\|x_t\|^2}{2t^2}  + \frac{\la x_t, z_0 \ra}{t^2} -  \frac{\|z_0\|^2}{2t^2} = -\frac{1}{2}\frac{\|x_t  - z_0\|^2}{t^2}.\notag
 \end{eqnarray} 
 Hence, in case of the optimal vector fields, the following holds for any $t \in [0,1]$ and $x_t \in \R^D$: \vspace{-1mm}
$$ \left[\frac{1}{2}\|\nabla s_t^\Psi(x_t)\|^2 + \frac{\partial s^\Psi_t}{\partial t} (x_t) \right] =  \frac{1}{2}\frac{\|x_t  - z_0\|^2}{t^2} -\frac{1}{2}\frac{\|x_t  - z_0\|^2}{t^2} \equiv 0. $$
As a consequence, AM loss \eqref{eq:AM loss} with $s^{\Psi}$ is \textit{completely independent from} $p_t$ for $t\in (0,1)$ and equivalent to dual OT loss \eqref{eq: OT loss} up to the second moments $p_0,p_1$ (constants during minimization):
 $$\mathcal{L}^{\{p_t\}}_{AM}(s^\Psi) = \int_{\R^D} \left(\Psi(x_0) - \frac{\|x_0\|^2}{2}\right) p_0(x_0) dx_0 + \int_{\R^D} \left(\overline{\Psi}(x_1) - \frac{\|x_1\|^2}{2}\right) p_1(x_1) dx_1. \vspace{-4mm}$$
 
\end{proof}

\noindent\textit{Similar to Flow Matching, the optimal vector fields in Action Matching always lead to the Optimal Transport solution \(\Psi^*\) regardless of the initial sequence \(\{p_t\}\). This highlights the potential of such fields in developing robust methods for solving OT problems in practice.}


\bibliographystyle{plain}\vspace{-3mm}
\bibliography{Bibliography}

\end{document}